\newtheorem{proposition}{\hspace{0pt}\bf Proposition}
\newtheorem{remark}{\hspace{0pt}\it \underline{Remark}}
\colorlet{LightGreen}{white!70!green}
\colorlet{LightBlue}{white!70!blue}
\title{Graph-Aware Diffusion for Signal Generation}
\name{\small Sergio Rozada$^*$, Vimal K B$^\dagger$, Andrea Cavallo$^\dagger$, Antonio G. Marques\sthanks{Work partially supported by the Spanish AEI (10.13039/501100011033) grant PID2022-136887NB-I00, and the Community of Madrid via the Ellis Madrid Unit and grants URJC/CAM F1180 and TEC-2024/COM-89.}, Hadi Jamali-Rad$^\dagger$, Elvin Isufi\sthanks{Part of this work was supported by the TU Delft AI Labs programme, the NWO OTP GraSPA proposal \#19497, the NWO VENI proposal 222.032, and the TU Delft - Shell.ai TKI project.}\vspace{-0.4cm}}
\address{\small $^*$ King Juan Carlos University, Madrid, Spain \\
\small $^\dagger$ Delft University of Technology, Delft, Netherlands}
\begin{document}
\ninept

\maketitle
\begin{abstract}
    We study the problem of generating graph signals from unknown distributions defined over given graphs, relevant to domains such as recommender systems or sensor networks. 
    Our approach builds on generative diffusion models, which are well established in vision and graph generation but remain underexplored for graph signals. 
    Existing methods lack generality, either ignoring the graph structure in the forward process or designing graph-aware mechanisms tailored to specific domains.
    We adopt a forward process that incorporates the graph through the heat equation. 
    Rather than relying on the standard formulation, we consider a time-warped coefficient to mitigate the exponential decay of the drift term, yielding a graph-aware generative diffusion model (GAD).
    We analyze its forward dynamics, proving convergence to a Gaussian Markov random field with covariance parametrized by the graph Laplacian, and interpret the backward dynamics as a sequence of graph-signal denoising problems. 
    Finally, we demonstrate the advantages of GAD on synthetic data, real traffic speed measurements, and a temperature sensor network.
\end{abstract}

\begin{keywords}
Diffusion models, graph signal processing, graph neural networks, time-warped scheduling.
\end{keywords}

\vspace{-0.1cm}

\section{Introduction}

\vspace{-0.1cm}

Data defined on irregular domains is pervasive, spanning applications from recommender systems to sensor networks, or wireless networks \cite{gao2023survey, chien2024opportunities}.
In this work, we focus on sampling from data distributions defined on irregular domains when the distribution itself is unavailable.
To that end, we leverage generative diffusion models (DMs), which have emerged as powerful tools for sampling unknown distributions \cite{yang2023diffusion}.
These models transform samples from an initial data distribution into noise through a (forward) diffusion process, typically converging to a simple stationary distribution.
By reversing this process, we have a (backward) mechanism that maps noisy samples back into the original data distribution.


While generative DMs have been proposed for graph generation \cite{liu2023generative}, less attention has been paid to generating signals on a known graph.
Existing approaches typically incorporate the graph only in the backward process, keeping the forward process graph-agnostic \cite{uslu2025generative}.
Yet, when signals are defined on graphs, the graph heat diffusion \cite{kondor2002diffusion} offers a natural way to design a graph-aware forward–backward process.
This idea has recently been explored for generative tasks in image synthesis \cite{rissanen2023generative} and Schr\"odinger bridges \cite{yang2025topological}.
Nonetheless, existing works remain restricted to the standard heat equation, which injects noise too rapidly and limits its suitability for generative purposes. In addition, they neither characterize its limiting behavior nor examine the influence of the graph on this limit.


In this work, we introduce GAD, a graph-aware DM for graph signal generation that addresses these gaps. 
We provide a comprehensive study of the forward and backward processes, highlighting the unique subtleties of applying DMs to graph signal generation. Our contributions are
\begin{itemize}[leftmargin=15pt,labelsep=0.5em, itemsep=0.0em]
    \item[\textbf{C1}] We consider a heat equation with a time-warped drift to control the decay of graph Laplacian modes and the pace of noise injection, ensuring smooth forward and backward processes.
    \item[\textbf{C2}] We analyze the stationary distribution induced by the proposed heat equation, which corresponds to a Gaussian Markov random field (GMRF) with covariance parametrized by the Laplacian.
    \item[\textbf{C3}] We connect the backward process to graph-signal processing (GSP), showing that it reduces to a sequence of graph denoising problems with a noise model that applies a graph filter to the signal and injects GMRF noise with known covariance.
    \item[\textbf{C4}] We demonstrate the advantages of our method on (i) a synthetic community-based setup, (ii) a real temperature sensor network, and (iii) real traffic speed measurements.
\end{itemize}

\noindent \textbf{Related work.} DMs have been applied to generate graphs \cite{liu2023generative}, but less attention has been paid to the case where the graph is given. 
In the latter setting, most methods incorporate the graph only in the backward process \cite{chen2025dhmoe, daiya2024diffstock, uslu2025generative}. 
A more principled alternative is to introduce the graph already in the forward process through the heat equation, which has been highly successful in graph machine learning \cite{kondor2002diffusion, vishwanathan2010graph} and has recently proven effective for inpainting tasks \cite{choi2023blurring, zhu2024graph, xia2025s, xu2025point, wang2025doublediffusion}.
For generation, graph-aware diffusion based on heat dynamics has been explored in computer vision \cite{hoogeboom2023blurring, rissanen2023generative} and applied to atomic position modeling \cite{park2024equivariant} and topological Schr\"odinger-bridges via optimal transport \cite{yang2025topological}.
However, these works (i) remain limited to the standard heat equation, which noises the signal too quickly and hinders generation, (ii) do not characterize the limiting behavior or the role of the graph in it, and (iii) provide little insight into the role of the graph in the backward process.
In contrast, we adapt the heat equation to control the noising rate, analyze its convergence to a graph-aware stationary distribution, and show that the backward process reduces to a sequence of graph-signal denoising problems.



\section{Preliminaries}

\textbf{GSP.} A graph $\ccalG = (\ccalV, \ccalE)$ consists of a set of $N$ nodes $\ccalV$ and edges $\ccalE \subseteq \ccalV \times \ccalV$.
Common algebraic representations of the graph $\ccalG$ include the adjacency matrix $\bbA \in \mathbb{R}^{N \times N}$ and the normalized graph Laplacian $\bbL = \bbI - \bbD^{-\frac{1}{2}} \bbA \bbD^{-\frac{1}{2}}$, where $\bbD=\diag(\bbA \bbone)$. 
A graph signal is a function defined on the set of nodes, represented as a vector $\bbx \in \mathbb{R}^N$, where $x_i$ denotes the signal value at node $i$.
The graph Laplacian $\bbL$ is diagonalizable as $\bbL = \bbV \bbLambda \bbV^\top$, where $\bbV \in \mathbb{R}^{N \times N}$ contains the eigenvectors of $\bbL$, and $\bbLambda$ is a diagonal matrix of corresponding eigenvalues.
The matrix $\bbV^{\top}$ defines the graph Fourier transform (GFT), so that $\tbx = \bbV^\top \bbx$ is the graph frequency representation of the signal $\bbx$~\cite{ortega2018graph}.
Graph convolutional filters are topology-aware operators for processing graph signals that are matrix polynomials of the Laplacian $\bbL$~\cite{isufi2024graph}, namely
\begin{equation}
    \label{eq:graph_filter}
     h(\bbL) = \sum_{k=0}^{K} \theta_k \bbL^k = \bbV \left( \sum_{k=0}^{K} \theta_k \bbLambda^k \right) \bbV^\top,
\end{equation}
where $\bbtheta = [\theta_0, \dots, \theta_{K}]^\top$ are the filter coefficients.
Since $\bbL^k$ captures $k$-hop neighborhoods, the output $h(\bbL)\bbx$ represents a diffusion of $\bbx$ across the graph, with $\theta_k$ controlling the weight of each $k$-hop.

\noindent \textbf{Generative DMs.} We consider that graph signals are random and drawn from an unknown distribution $p_0$. Although $p_0$ is not explicitly known, we have access to i.i.d. samples $\{ \bbx_0^{(i)}\}_{i=1}^M \sim p_0$. 
The goal is to learn to generate new samples from $p_0$.
To this end, we adopt a generative DM perspective.
In DMs, a forward process maps a signal $\bbx_0$ into Gaussian noise.
Then, by approximating the inverse of this mapping we can recover a signal from $p_0$ starting from Gaussian noise.
Specifically, the data-to-noise mapping is defined by a forward diffusion stochastic differential equation (SDE) of the form
\begin{equation}
    \label{eq::diff_raw}
    d\bbx_t = f(\bbx_t, t)dt + g(\bbx_t, t)d\bbw_t,
\end{equation}
where $f$ is the deterministic drift term,  $g$ is the diffusion coefficient, and $\bbw_t$ is Brownian motion \cite{baxendale2007stochastic}.
By appropriately designing $f$ and $g$, the process converges at terminal time $T$ to a Gaussian distribution.
To approximate the inverse mapping, we rely on the fact that diffusion SDEs can be reversed under mild conditions \cite{anderson1982reverse}, yielding the backward SDE
\begin{equation}
    \label{eq::diff_back_raw}
    d\bbx_t \!=\! \left[f(\bbx_t, t) - g(\bbx_t, t)^2 \nabla_{\bbx_t} \log p_t(\bbx_t)\right] dt + g(\bbx_t, t)\,d\bbw_t,
\end{equation}
where $p_t$ denotes the data distribution at timestep $t$, and the gradient $\nabla_{\bbx_t} \log p_t(\bbx_t)$ is termed as score function.
Since $f$ and $g$ are predefined, if the score function $\nabla_{\bbx_t} \log p_t(\bbx_t)$ was known, the reverse process in \eqref{eq::diff_back_raw} could be run from a sample of $p_T$ to generate a sample from $p_0$.
Unfortunately, this is rarely the case, as the marginal distribution $p_t$ depends on  $p_0$.
This leads to the fundamental problem of DMs, which is to estimate the scores. 
As shown in the following sections, this task naturally relates to graph-signal denoising.

\section{Graph-aware Generative Diffusion Model}

GAD leverages the graph $\ccalG$ to design a graph-aware SDE for sampling from $p_0$, which requires specifying the drift $f$ and the diffusion coefficient $g$ in \eqref{eq::diff_raw}, as well as a methodology to approximate the score function $\nabla_{\bbx_t} \log p_t(\bbx_t)$ in \eqref{eq::diff_back_raw}.

\noindent \textbf{Forward process.} 
We model the forward dynamics as the heat equation on the graph Laplacian $\bbL$ with additive noise, which is a (time-inhomogeneous) Ornstein–Uhlenbeck process given by
\begin{equation}
    \label{eq::forward}
    d\bbx_t = -c_t \bbL_\gamma \bbx_t\,dt + \sqrt{2c_t}\,\sigma\,d\bbw_t,
\end{equation}
%
where $c_t$ denotes a \textit{time-dependent} drift coefficient;
$\sigma>0$ is the noise strength; and $\bbL_\gamma = \bbL + \gamma \bbI$ with $\gamma>0$ being a small centering parameter. 
%
\begin{figure}[t]
    \centering
    \includegraphics[width=\linewidth]{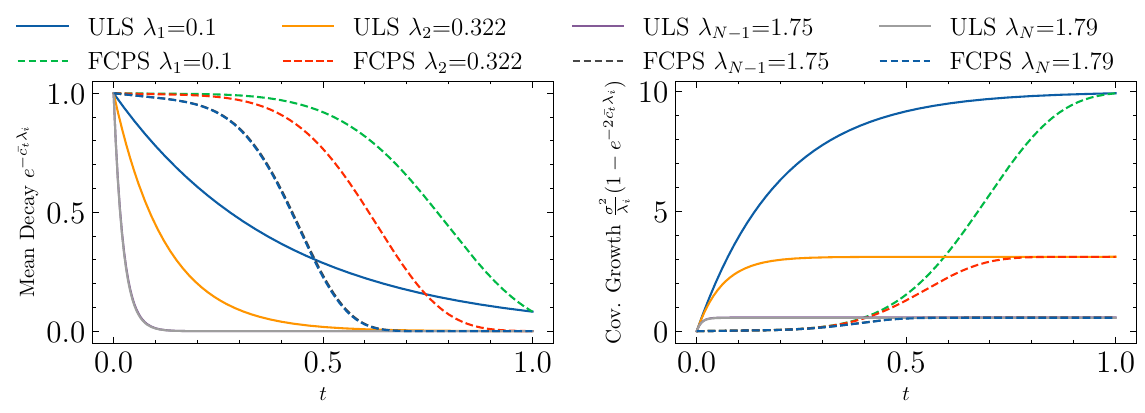}
    \caption{Evolution of the eigenvalues of $\bbH_t$ (left) and $\bbSigma_t$ (right) as time $t$ increases with uniform linear scheduler (ULS) and floor constrained polynomial scheduler (FCPS).}
    \label{fig:eig_decay}
    \vspace{-0.5cm}
\end{figure}
The distributions induced by \eqref{eq::forward} for all $t$ are available in closed form.
Given $\bbx_0$, the distribution of $\bbx_t$ is a GMRF with
\begin{align}
    \label{eq::mean_cov}
    \bbmu_t = \bbH_t \bbx_0, \quad 
    \bbSigma_t = \sigma^2  \left( \bbI - \bbH_t^2 \right) \bbL_\gamma^{-1},
\end{align}
where $\bbH_t = e^{-\bar c_t \bbL_\gamma }$ with $\bar c_t=\int_0^t c_sds$.
Thus, the forward process applies a low-pass filter $\bbH_t$ that attenuates the original signal and injects graph-structured noise.
As $t$ increases, the operator $\bbH_t$ decays exponentially.
Existing works on the graph heat equation for graph-signal generation \cite{zhu2024graph, yang2025topological} assume a uniform linear scheduler for the drift coefficient, i.e.,  $c_t=ct$.
However, this causes the dominant eigenmodes of $-c_t\bbL_\gamma$ to decay too quickly (cf. Figure~\ref{fig:eig_decay}). 
Inspired by time schedulers in image diffusion \cite{nichol2021improved}, we propose a time-dependent floor constrained polynomial scheduler (FCPS) 
\[
    c_t = c_\textnormal{min} + ku^\alpha \quad \text{with} \; u=t/T, \; k=(c_0-c_\textnormal{min}T)(\alpha+1)/T 
\]
and constants $c_0,\alpha>1$, $0<c_\textnormal{min}<1$.
This leads to $\bar c_t = c_\textnormal{min}t + (c_0-c_\textnormal{min}T)u^{\alpha+1}$, which induces a time-warping effect, i.e., for $u\leq 1$, $u^{\alpha+1}$ grows much slower than $u$, especially for $\alpha \gg 1$ (cf. Figure~\ref{fig:eig_decay}).
Regarding $\bbL_\gamma$, because $\bbL$ has a zero eigenvalue, we add $\gamma \bbI$ to ensure $-c_t(\bbL+\gamma\bbI)$ is Hurwitz, i.e. all eigenvalues are strictly negative, which guarantees convergence to a stationary distribution \cite{sarkka2019applied}.
In graph-heat-based generation, the coefficient $\gamma$ is crucial but often overlooked. To guarantee convergence, $\gamma > 0$ is required. Equally important, its value must be chosen carefully: if $\gamma$ is too small, low modes decay too slowly, whereas if it is too large, the geometry induced by the Laplacian is lost.

The SDE in \eqref{eq::forward} converges asymptotically when $\bar c_t \to \infty$ to a stationary distribution given by the Lyapunov equation \cite{sarkka2019applied}, which in this case is a GMRF with $\bbmu_\infty = \bbzero$ and $\bbSigma_\infty = \sigma^2 \bbL_\gamma^{-1}$.
In practice, for $t=T$, we have $\bbH_T = e^{-c_0\bbL_\gamma}$, i.e., $c_0$ controls how close the process gets to the stationary distribution.

%

\begin{remark}
    Since computing matrix exponentials may be numerically unstable, we evaluate $\bbH_t = \bbV e^{-\bar c_t(\bbLambda+\gamma\bbI)}\bbV^\top$ via eigendecomposition. Beyond numerical stability, this approach suggests defining $\bar c_t$ per eigenvalue to independently control mode decay \cite{hoogeboom2023blurring}, a promising future research direction.
\end{remark}

\noindent \textbf{Backward process.}
As per \eqref{eq::diff_back_raw}, the corresponding backward SDE is
\begin{equation}
    \label{eq::backward}
    d\bbx_t = -c_t \bbL_\gamma \bbx_t\,dt - 2c_t\,\sigma^2\,\nabla_{\bbx_t} \log p_t(\bbx_t)\,dt + \sqrt{2c_t}\,\sigma\,d  \bbw_t.
\end{equation}
Because $\nabla_{\bbx_t} \log p_t(\bbx_t)$ is not available in closed form, it must be approximated. Although this is generally challenging, Tweedie’s formula \cite{meng2021estimating} provides a way to rewrite the score as
\begin{equation}
    \label{eq::scores_tweedies}
    \nabla_{\bbx_t} \log p_t(\bbx_t)
    = \bbSigma_t^{-1} \left( \bbH_t \, \mathbb{E}[\bbx_0 \mid \bbx_t] - \bbx_t \right).
\end{equation}
Recall that $\bbH_t$ and $\bbSigma_t$ are both known. Interestingly, the posterior mean $\mathbb{E}[\bbx_0 \mid \bbx_t]$ can be reformulated as the minimizer $g^\star$ of the minimum mean squared (MMSE) problem \cite{meng2021estimating}
\begin{equation}
    \label{eq::mmse_error}
    g^\star \in \argmin_g \;\; \ell(g) := \mathbb{E}\left[ \| \bbx_0 - g(\bbx_t)\|^2 \right],
\end{equation}
so that $\mathbb{E}[\bbx_0 \mid \bbx_t] = g^\star(\bbx_t)$.
Therefore, by solving \eqref{eq::mmse_error} and obtaining a minimizer $g^\star$, we can evaluate the scores $\nabla_{\bbx_t} \log p_t(\bbx_t)$ from \eqref{eq::scores_tweedies} and use them to run the backward process in \eqref{eq::backward}. 
While solving \eqref{eq::mmse_error} is nontrivial, the problem reduces to identifying a mapping $g$ that removes noise from a graph signal, a classical problem in GSP.
Hence, in next section we adopt a GSP perspective to address \eqref{eq::mmse_error}.

\section{A Graph-Signal-Processing Perspective}

Designing a mapping $g$ to recover $\bbx_0$ from $\bbx_t$ constitutes a \emph{graph-based inverse problem}. We adopt a GSP perspective to design an estimator that explicitly accounts for the underlying graph structure. Specifically, the noise model is
\begin{equation}
    \label{eq::noise_model}
    \bbx_t = \bbH_t\bbx_0 + \bbe, \quad \bbe \sim \ccalN(\bbzero, \bbSigma_t),
\end{equation}
where $\bbH_t$ and $\bbSigma_t$ are both known and defined in \eqref{eq::mean_cov}. 
Recovery amounts to removing the noise $\bbe$ and deconvolving to obtain $\bbx_0$, a problem extensively studied in the GSP literature \cite{do2020graph, fu2020understanding, chen2021graph, nagahama2022graph, kroizer2022bayesian, rey2023robust}.


\begin{figure}[t]
    \centering
    \includegraphics[width=\linewidth]{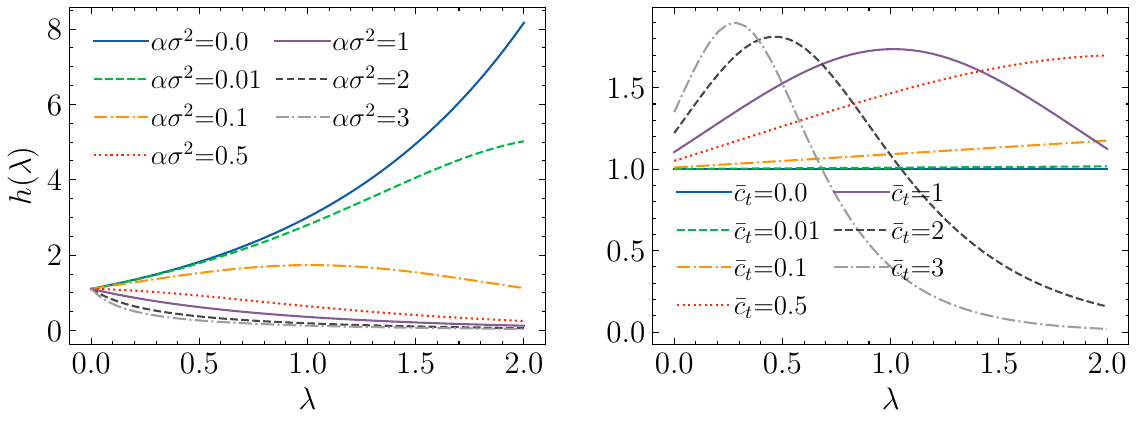}
    \caption{Frequency response of the graph filter in \eqref{eq::gf_solution} for different values of $\alpha \sigma^2$ (left) and $\bar c_t$ (right). }
    \label{fig:freq_response}
    \vspace{-0.5cm}
\end{figure}

\noindent \textbf{Graph-filter estimator.}
To recover $\bbx_0$ from the noise model in \eqref{eq::noise_model}, one typically leverages prior structural knowledge of $\bbx_0$ to simplify the denoising task.
We exploit the fact that many graph signals are smooth with respect to the graph Laplacian, as extensively studied in the graph-signal denoising literature \cite{ramakrishna2020user}.
To gain insight into the graph-denoising problem, we first consider a single signal $\bbx_0$ at a fixed time $t$, where the mapping $g$ can be defined as the optimization
\begin{equation}
    \label{eq::denoising_problem}
    \hat \bbx_0 = g(\bbx_t) = \arg\min_{\bbx_0} \;\;
    \| \bbx_t - \bbH_t \bbx_0  \|_{\bbSigma_t^{-1}}^2 + \alpha \| \bbx_0 \|_\bbL^2,
\end{equation}
where $\alpha \geq 0$ is a regularization constant. 
Unlike standard graph denoising, both the low-pass graph-filter $\bbH_t$ and the noise covariance $\bbSigma_t$ are known, so they can be directly incorporated into the optimization.
The solution to \eqref{eq::denoising_problem} admits closed-form expression, that we summarize in the following proposition.
\begin{proposition}
    \label{prop::graph_filter}
    The solution to \eqref{eq::denoising_problem} is given by $\hat \bbx_0 = h(\bbL) \bbx_t$, with
    \begin{equation}
        \label{eq::solution}
        h(\bbL) = \left( \bbH_t^\top \bbSigma_t^{-1} \bbH_t + \alpha \bbL \right)^{-1} \bbH_t^\top \bbSigma_t^{-1},
    \end{equation}
    which is a graph filter in the eigenbasis of $\bbL$ with frequency response
    \begin{equation}
    h(\lambda) = \frac{e^{(\lambda + \gamma)\bar c_t}}{1 + \alpha \sigma^2 \frac{\lambda}{\lambda+\gamma}(e^{2(\lambda + \gamma) \bar c_t}-1)}.
\end{equation}
\end{proposition}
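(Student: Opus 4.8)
The plan is to treat \eqref{eq::denoising_problem} as an unconstrained, strictly convex quadratic program in $\bbx_0$, write down its normal equations, and then diagonalize every matrix involved in the eigenbasis of $\bbL$. Concretely, the objective $J(\bbx_0) = \|\bbx_t - \bbH_t\bbx_0\|_{\bbSigma_t^{-1}}^2 + \alpha\|\bbx_0\|_\bbL^2$ is a sum of two positive-semidefinite quadratic forms, hence convex, so any stationary point is a global minimizer. Using that $\bbSigma_t^{-1}$ and $\bbL$ are symmetric, $\nabla_{\bbx_0}J = -2\bbH_t^\top\bbSigma_t^{-1}(\bbx_t - \bbH_t\bbx_0) + 2\alpha\bbL\bbx_0$; setting this to $\bbzero$ gives $(\bbH_t^\top\bbSigma_t^{-1}\bbH_t + \alpha\bbL)\bbx_0 = \bbH_t^\top\bbSigma_t^{-1}\bbx_t$. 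Writing $\bbM := \bbH_t^\top\bbSigma_t^{-1}\bbH_t + \alpha\bbL$, I will check below that $\bbM \succ 0$, hence invertible, so $\hat\bbx_0 = \bbM^{-1}\bbH_t^\top\bbSigma_t^{-1}\bbx_t =: h(\bbL)\bbx_t$, which is \eqref{eq::solution}.

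Next I would argue that $h(\bbL)$ is genuinely a graph filter. From \eqref{eq::mean_cov}, $\bbH_t = e^{-\bar c_t\bbL_\gamma}$ and $\bbSigma_t = \sigma^2(\bbI - \bbH_t^2)\bbL_\gamma^{-1}$ with $\bbL_\gamma = \bbL + \gamma\bbI$, and these are symmetric analytic functions of $\bbL$; hence so are $\bbSigma_t^{-1}$, $\alpha\bbL$, their sum $\bbM$, and $\bbM^{-1}$. All of them therefore share the eigendecomposition $\bbV(\cdot)\bbV^\top$ of $\bbL = \bbV\bbLambda\bbV^\top$, so $h(\bbL) = \bbV\,\diag(h(\lambda_1),\dots,h(\lambda_N))\,\bbV^\top$, and the scalar frequency response is obtained by substituting, on the $\lambda$-eigenspace, $\bbH_t \mapsto e^{-\bar c_t(\lambda+\gamma)}$, $\bbSigma_t \mapsto \sigma^2(1 - e^{-2\bar c_t(\lambda+\gamma)})/(\lambda+\gamma)$, and $\bbL\mapsto\lambda$. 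This gives
\[
  h(\lambda) = \frac{e^{-\bar c_t(\lambda+\gamma)}\,\dfrac{\lambda+\gamma}{\sigma^2\big(1-e^{-2\bar c_t(\lambda+\gamma)}\big)}}{e^{-2\bar c_t(\lambda+\gamma)}\,\dfrac{\lambda+\gamma}{\sigma^2\big(1-e^{-2\bar c_t(\lambda+\gamma)}\big)} + \alpha\lambda}.
\]

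Finally I would simplify and close the invertibility gap. Multiplying numerator and denominator by the factor $\sigma^2(1-e^{-2\bar c_t(\lambda+\gamma)})/[(\lambda+\gamma)e^{-2\bar c_t(\lambda+\gamma)}]$ and using $(1-e^{-2\bar c_t(\lambda+\gamma)})/e^{-2\bar c_t(\lambda+\gamma)} = e^{2(\lambda+\gamma)\bar c_t} - 1$ collapses the fraction to the claimed $h(\lambda) = e^{(\lambda+\gamma)\bar c_t}/(1 + \alpha\sigma^2\tfrac{\lambda}{\lambda+\gamma}(e^{2(\lambda+\gamma)\bar c_t}-1))$. The same bookkeeping shows the eigenvalues of $\bbM$ are $e^{-2\bar c_t(\lambda+\gamma)}(\lambda+\gamma)/[\sigma^2(1-e^{-2\bar c_t(\lambda+\gamma)})] + \alpha\lambda > 0$ for every $\lambda \ge 0$ (since $\gamma>0$, $\bar c_t>0$, and $0 < e^{-2\bar c_t(\lambda+\gamma)} < 1$), which justifies $\bbM\succ 0$; the mode $\lambda=0$ needs no special handling because $\gamma>0$ makes the $\lambda=0$ eigenvalue of $\bbSigma_t$ equal to $\sigma^2(1-e^{-2\bar c_t\gamma})/\gamma \ne 0$, so $\bbSigma_t^{-1}$ is well defined and $h(0) = e^{\gamma\bar c_t}$.

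I do not expect a genuine obstacle here — this is a weighted-least-squares / Tikhonov computation — but the point requiring care is propagating the $\bbSigma_t^{-1}$ weight consistently through both the differentiation and the per-eigenvalue substitution: it is precisely this weighting that produces the $\lambda/(\lambda+\gamma)$ factor (rather than a bare $\lambda$) in the denominator of $h(\lambda)$, so dropping or mishandling it would yield the wrong frequency response.
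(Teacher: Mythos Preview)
Your proposal is correct and follows essentially the same route as the paper: set the gradient of the quadratic objective to zero to obtain \eqref{eq::solution}, then diagonalize $\bbH_t$, $\bbSigma_t$, and $\bbL$ simultaneously in the eigenbasis $\bbV$ of $\bbL$ to read off the scalar frequency response. You are somewhat more careful than the paper in that you explicitly verify $\bbM\succ 0$ and carry out the algebraic simplification to the closed-form $h(\lambda)$, whereas the paper stops at the block-diagonal expression \eqref{eq::gf_solution} and leaves those details implicit.
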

\begin{proof}
    Taking gradients of \eqref{eq::denoising_problem} with respect to $\bbx_0$ and setting them to zero yields the expression in \eqref{eq::solution}.
    Next, to prove that it corresponds to a graph filter. Note that $\bbL_\gamma$ is diagonalizable by the same eigenbasis as $\bbL$, i.e., $\bbL_\gamma = \bbV \bbLambda_\gamma \bbV^\top$ with $\bbLambda_\gamma = \bbLambda + \gamma \bbI$. Then, by the properties of the matrix exponential, we have that
    \begin{align}
        \label{eq::h_eigen}
        \bbH_t &= e^{-\bar c_t \bbL_\gamma} = \bbV e^{-\bar c_t \bbLambda_\gamma} \bbV^\top \\
        \label{eq::sigma_eigen}
        \bbSigma_t &= \sigma^2 \!\! \left( \bbI \!-\! e^{-2 \bar c_t \bbL_\gamma} \! \right) \bbL_\gamma^{-1} \!\! = \!\! \bbV \! \left[ \sigma^2 \!\! \left( \bbI \!-\! e^{-2 \bar c_t \bbLambda_\gamma} \! \right)\bbLambda_\gamma^{-1}\right] \!\! \bbV^\top \!\!.
    \end{align}
    Defining
    $\bbLambda_{\bbH_t}=e^{-\bar c_t \bbLambda_\gamma}$, and $\bbLambda_{\bbSigma_t}=\sigma^2 \left( \bbI - e^{-2 \bar c_t \bbLambda_\gamma} \right)\bbLambda_\gamma^{-1}$,
    and substituting \eqref{eq::h_eigen} and \eqref{eq::sigma_eigen} into \eqref{eq::solution} yields
    \begin{align}
        \notag
        h(\bbL) &= \left( \bbH_t^\top \bbSigma_t^{-1} \bbH_t + \alpha \bbL \right)^{-1} \bbH_t^\top \bbSigma_t^{-1} \\
        \label{eq::gf_solution}
        &= \bbV \left[ \left( \bbLambda_{\bbH_t}^2 \bbLambda_{\bbSigma_t}^{-1} + \alpha \bbLambda \right)^{-1} \bbLambda_{\bbH_t} \bbLambda_{\bbSigma_t}^{-1}\right] \bbV^\top,
    \end{align}
    which is a graph filter in the eigenbasis of $\bbL$.
\end{proof}

Proposition \ref{prop::graph_filter} shows that the solution to \eqref{eq::denoising_problem} has a rational frequency response $h(\lambda)$ with the form of an ARMA graph filter~\cite{isufi2016autoregressive} and provides insights into the behavior of the proposed DM.
%
%
In particular, $h(\lambda)$ reflects a tension between the exponential numerator, which replenishes high frequencies, and the doubly-exponential denominator, which suppresses them. 
The factor $\alpha\sigma^2$ controls how strongly the denominator weighs in. As shown in Fig.~\ref{fig:freq_response} (left), when $\alpha\sigma^2$ is small, the numerator dominates and the filter behaves like a high-pass operator. In contrast, for large $\alpha\sigma^2$, the denominator prevails, yielding a low-pass response. The time parameter $\bar c_t$ plays a particularly interesting role: although both terms grow with $\bar c_t$, the denominator grows faster. Consequently, as seen in Fig.~\ref{fig:freq_response} (right), large $\bar c_t$ inevitably produces a low-pass response, while decreasing $\bar c_t$ transitions the filter from high-pass to nearly flat when the forward process introduces little noise. This progression suggests that the backward process unfolds in stages: at large $\bar c_t$ it primarily recovers coarse, low-frequency structure, while smaller $\bar c_t$ gradually reintroduces high-frequency details.



\begin{algorithm}[t]
\vspace{-1mm}
\caption{Training of GAD}\label{alg:training}
    \KwIn{Graph Laplacian $\bbL$; data samples $\ccalD = \{\bbx_0^{(i)}\}_{i=1}^M$; time horizon $T$; and learning rate $\eta$.}
    
    Sample $\bbx_0 \sim \ccalD$ and $t \sim \mathcal{U}(0,T)$
    
    Compute $\bbmu_t,\bbSigma_t$ via \eqref{eq::mean_cov} and draw $\bbx_t \sim \mathcal{N}(\bbmu_t,\bbSigma_t)$
    
    Estimate $\hat{\bbx}_0$ using the GCNN in \eqref{eq::single_layer_of_gnn} as $\hat{\bbx}_0 \gets g_{\bbTheta}(\bbx_t,\bbL, t)$

    \vspace{-1mm}
    Update $\bbTheta$ via gradient descent on \eqref{eq::gnn_training}  as $\bbTheta \!\leftarrow\! \bbTheta \!-\! \eta \hat \nabla_{\bbTheta}\ell(\bbTheta)$\;
\end{algorithm}

\begin{figure*}[t]
    \centering
    \includegraphics[width=\linewidth]{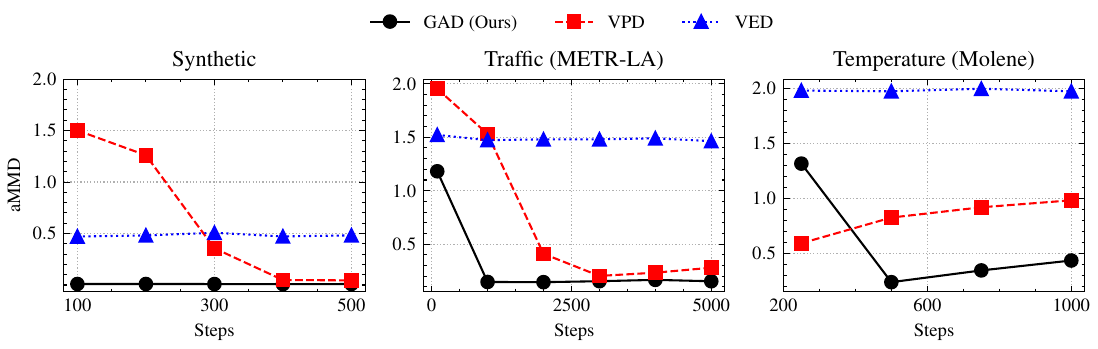}
    \caption{GAD outperforms VPD and VED, particularly when only a small number of discretization steps are used. 
    This trend holds consistently in both the synthetic SBM setting (left), the METR-LA traffic dataset (middle), and the Molene temperature dataset (right).}
    \vspace{-0.5cm}
    \label{fig:results}
\end{figure*}

\noindent \textbf{GCNN estimators.}
While \eqref{eq::solution} provides a principled way to recover $\bbx_0$ from $\bbx_t$, it applies only to a single signal and timestep.
In practice, problem \eqref{eq::mmse_error} considers $\bbx_0$ in expectation, and the backward process requires estimating $\bbx_0$ across different values of $t$, with the associated filter evolving from low-pass in the early stages to high-pass toward the end.
The interpretation of \eqref{eq::solution} as a graph filter in the eigenbasis of $\bbL$ motivates the design of a learnable graph filter architecture to denoise $\bbx_t$ across varying timesteps and signals. Building on this idea, our GAD architecture proposes using a GCNN~\cite{gama2020graphs}, consisting of a cascade of $L$ graph filters, as a flexible and learnable graph-aware denoiser. Specifically, each layer $l$ computes
\begin{equation}
\label{eq::single_layer_of_gnn}
    \hat \bbx^{(l+1)} = \text{ReLU}\left( \sum_{k=0}^K \theta_k^{(l)} \bbL^k \hat \bbx^{(l)} \right),
\end{equation}
where $\bbtheta^{(l)} = [\theta_0^{(l)}, \ldots, \theta_K^{(l)}]^\top$ are the filter coefficients at layer $l$, and $\sigma$ is a point-wise nonlinearity. 
Stacking all layers, the GCNN is defined as a mapping $g_{\bbTheta}(\bbx_t, \bbL, t)$ with $\bbTheta = [\bbtheta^{(0)}, \ldots, \bbtheta^{(L)}]$, where the time index is incorporated as a graph feature \cite{yang2023diffusion}.
Note that $g_{\bbTheta}$ parametrizes $g$, yielding the approximate MMSE problem
\begin{equation}
\label{eq::gnn_training}
    \bbTheta^\star
    \in \argmin_{\bbTheta} \; \ell(\bbTheta):=\mathbb{E} \left[ || \bbx_0 - g_{\bbTheta}(\bbx_t, \bbL, t) ||^2 \right],
\end{equation}
so that $\mathbb{E}[\bbx_0 \mid \bbx_t] \approx g_{\bbTheta^\star}(\bbx_t, \bbL, t)$.
Problem \eqref{eq::gnn_training} is solved via stochastic gradient descent over varying noise levels as depicted in  Algorithm~\ref{alg:training}, which summarizes our main contribution.
Once trained, the estimator $g_{\bbTheta}(\bbx_t, \bbL, t)$ recovers the unknown signal $\bbx_0$ from any observation $\bbx_t$, which in turn enables the computation of approximate scores
\[
    \nabla_{\bbx_t} \log p_t(\bbx_t)
    \approx \bbSigma_t^{-1} \left( \bbH_t \, g_{\bbTheta^\star}(\bbx_t, \bbL, t) - \bbx_t \right).
\]
These approximate scores can be used to simulate the backward process in \eqref{eq::backward}, which can be implemented with different SDE solvers. 
We use the Euler–Maruyama solver, while noting that leveraging advanced solvers represents a promising future research direction.

\section{Numerical experiments}

We evaluate GAD on: i) a synthetic scenario based on a stochastic block model (SBM) graph with smooth signals; ii) METR-LA \cite{li2018diffusion}, a real traffic dataset of highway speed measurements, and iii) Molene \cite{girault2017local}, a real dataset of temperature measurements from sensors in France.
The code for these experiments is available on GitHub \footnote{ \url{https://github.com/vimalkb7/gad}}.

\noindent \textbf{Datasets.}
In the synthetic example we consider an SBM graph with two communities of $10$ nodes each. 
We generate $500$ train graph signals and $500$ test graph signals, where each community follows a Gaussian distribution with standard deviation $1$, mean $1$ in one community, and mean $-1$ in the other. 
To induce smoothness, the signals are filtered with a low-pass graph filter. 
The METR-LA dataset contains over $200{,}000$ traffic speed measurements from $207$ loop detectors in Los Angeles. The graph is defined by the road network, with sensors as nodes and edges representing physical adjacency. We randomly sample $2,000$ observations for training and $500$ for testing. 
The Molene dataset contains temperature measurements from $37$ weather stations in France. The graph is defined by proximity between stations, with nodes as weather stations and edges encoding spatial adjacency. We use daily average temperatures as signals, with $670$ observations for training and $74$ for testing.

\noindent \textbf{Experimental setup.} Since most of the literature is application-specific, selecting baselines is not straightforward.
Most existing approaches fall under the variance-preserving diffusion framework \cite{daiya2024diffstock, uslu2025generative}, which can be unified under the SDE perspective (VPD) \cite{yang2023diffusion}. We also include the variance-exploding diffusion framework (VED) as a baseline, which has not been used for graph-signal diffusion but is common in image-based diffusion \cite{yang2023diffusion}. Both frameworks are graph-agnostic, which under the SDE perspective makes them directly comparable to GAD.
For evaluation, we follow the procedure used in graph generation \cite{liu2023generative}.
We first extract three metrics from both the generated and test signals that capture their statistical, structural, and spectral properties. 
We then compute the maximum mean discrepancy (MMD) between the distributions of each metric in the generated and test sets. Finally, we average these three MMD values to obtain the average MMD (aMMD), which quantifies the similarity between the generated and test signal distributions.
Regarding the metrics, we employ the quadratic variation (QV) to capture the smoothness, the spectral centroid (SC) to summarize the distribution of the spectrum, and the degree correlation (DC) to measure how signals align with the degree vector of the graph. Formally we have
\begin{align}
    \notag
    &\mathrm{QV}(\bbx) = \bbx^\top \bbL \bbx, \quad \mathrm{SC}(\bbx) = \frac{\sum_{i=1}^N \lambda_i \, |\tilde x_i|^2}{\sum_{i=1}^N |\tilde x_i|^2},  \\
    \notag
    &\text{and} \quad \mathrm{DC}(\bbx) =
    \frac{(\bbx - \bar \bbx )^\top (\bbd - \bar \bbd )}
    {\|\,\bbx - \bar \bbx \,\|_2 \;\;\|\,\bbd - \bar \bbd \,\|_2} 
\end{align}
where $\tilde \bbx = \bbV^\top \bbx$ and $\bar \bbx = (\tfrac{1}{N}\mathbf{1}^\top \bbx) \bbone$ and $\bar \bbd = (\tfrac{1}{N}\mathbf{1}^\top \bbd) \bbone$.
As stated previously, generation is carried out with an Euler–Maruyama discretization of \eqref{eq::backward}. 
Increasing the number of discretization steps improves quality but raises computational cost, especially in DM models where each step requires a forward pass through the GCNN, which can be prohibitive.  We report results for different step counts.

\noindent \textbf{Findings.}  The results are illustrated in Fig.~\ref{fig:results}.
Across datasets GAD consistently outperforms the baselines, with its advantage most pronounced when the number of steps is small. 
On the synthetic SBM graph this translates into clear gains over both VPD and VED at low step counts, with GAD and VPD approaching similar performance as the number of steps grows. 
On Molene, GAD outperforms VPD and VED overall, but the dependence on the number of steps is less pronounced. VPD can be competitive for very few steps, yet its performance deteriorates as the step count increases.
Regarding METR-LA, the behavior mirrors the synthetic case, with GAD outperforming the alternatives, especially at small step counts.
Overall these results show that GAD exploits the graph structure more effectively than existing approaches, delivering consistent improvements across datasets and particularly strong benefits when a limited number of discretization steps can be used.

\section{Conclusions}

This paper presented GAD, a graph-aware generative DM for graph signals, building on DMs based on the heat equation. 
We introduced a forward process defined by the graph Laplacian with a time-dependent drift factor controlling the decay rate, and characterized its limiting distribution as a GMRF with covariance parametrized by the graph.
We analyzed the backward process from a GSP perspective, showing that score estimation reduces to a sequence of graph-signal denoising problems, and studied the behavior of the associated optimal filters through spectral analysis. 
This motivated the use of GCNNs as approximators within the model. 
Experiments with synthetic and real data showed that GAD outperforms graph-agnostic baselines, especially for small number of generative steps.

\newpage

\bibliographystyle{IEEEtran}
\bibliography{refs}

\begin{thebibliography}{10}
\providecommand{\url}[1]{#1}
\csname url@samestyle\endcsname
\providecommand{\newblock}{\relax}
\providecommand{\bibinfo}[2]{#2}
\providecommand{\BIBentrySTDinterwordspacing}{\spaceskip=0pt\relax}
\providecommand{\BIBentryALTinterwordstretchfactor}{4}
\providecommand{\BIBentryALTinterwordspacing}{\spaceskip=\fontdimen2\font plus
\BIBentryALTinterwordstretchfactor\fontdimen3\font minus \fontdimen4\font\relax}
\providecommand{\BIBforeignlanguage}[2]{{%
\expandafter\ifx\csname l@#1\endcsname\relax
\typeout{** WARNING: IEEEtran.bst: No hyphenation pattern has been}%
\typeout{** loaded for the language `#1'. Using the pattern for}%
\typeout{** the default language instead.}%
\else
\language=\csname l@#1\endcsname
\fi
#2}}
\providecommand{\BIBdecl}{\relax}
\BIBdecl

\bibitem{gao2023survey}
C.~Gao \emph{et~al.}, ``A survey of graph neural networks for recommender systems: {C}hallenges, methods, and directions,'' \emph{ACM Trans. Rec. Sys.}, vol.~1, no.~1, pp. 1--51, 2023.

\bibitem{chien2024opportunities}
E.~Chien \emph{et~al.}, ``Opportunities and challenges of graph neural networks in electrical engineering,'' \emph{Nature Reviews Elec. Eng.}, vol.~1, no.~8, pp. 529--546, 2024.

\bibitem{yang2023diffusion}
L.~Yang \emph{et~al.}, ``Diffusion models: {A} comprehensive survey of methods and applications,'' \emph{ACM Comp. Surveys}, vol.~56, no.~4, pp. 1--39, 2023.

\bibitem{liu2023generative}
C.~Liu \emph{et~al.}, ``Generative diffusion models on graphs: {M}ethods and applications,'' in \emph{Proc. Intl. Joint Conf. Artificial Intel.}, 2023, pp. 6702--6711.

\bibitem{uslu2025generative}
Y.~B. Uslu, S.~Hadou, S.~S. Bidokhti, and A.~Ribeiro, ``Generative diffusion models for resource allocation in wireless networks,'' \emph{arXiv preprint arXiv:2504.20277}, 2025.

\bibitem{kondor2002diffusion}
R.~I. Kondor and J.~Lafferty, ``Diffusion kernels on graphs and other discrete structures,'' in \emph{Intl. Conf. Machine Learning (ICML)}, vol. 2002, 2002, pp. 315--322.

\bibitem{rissanen2023generative}
S.~Rissanen, M.~Heinonen, and A.~Solin, ``Generative modelling with inverse heat dissipation,'' in \emph{Intl. Conf. Learning Representations (ICLR)}, 2023.

\bibitem{yang2025topological}
M.~Yang, ``Topological {S}chr{\"o}dinger bridge matching,'' in \emph{Intl. Conf. Learning Representations (ICLR)}, 2025.

\bibitem{chen2025dhmoe}
W.~Chen and Y.~Wang, ``Dhmoe: Diffusion generated hierarchical multi-granular expertise for stock prediction,'' in \emph{AAAI Conf. Artif. Intell.}, vol.~39, no.~11, 2025, pp. 11\,490--11\,499.

\bibitem{daiya2024diffstock}
D.~Daiya, M.~Yadav, and H.~S. Rao, ``Diffstock: {P}robabilistic relational stock market predictions using diffusion models,'' in \emph{IEEE Intl. Conf. Acoust., Speech Signal Process. (ICASSP)}.\hskip 1em plus 0.5em minus 0.4em\relax IEEE, 2024, pp. 7335--7339.

\bibitem{vishwanathan2010graph}
S.~V.~N. Vishwanathan, N.~N. Schraudolph, R.~Kondor, and K.~M. Borgwardt, ``Graph kernels,'' \emph{J. Mach. Learn. Res. (JMLR)}, vol.~11, pp. 1201--1242, 2010.

\bibitem{choi2023blurring}
J.~Choi, S.~Hong, N.~Park, and S.-B. Cho, ``Blurring-sharpening process models for collaborative filtering,'' in \emph{ACM SIGIR Conf. Research Dev. Inform. Retrieval}, 2023, pp. 1096--1106.

\bibitem{zhu2024graph}
Y.~Zhu, C.~Wang, Q.~Zhang, and H.~Xiong, ``Graph signal diffusion model for collaborative filtering,'' in \emph{ACM SIGIR Conf. Research Dev. Inform. Retrieval}, 2024, pp. 1380--1390.

\bibitem{xia2025s}
R.~Xia \emph{et~al.}, ``S-diff: {A}n anisotropic diffusion model for collaborative filtering in spectral domain,'' in \emph{ACM Intl. Conf. Web Search Data Mining}, 2025, pp. 70--78.

\bibitem{xu2025point}
W.~Xu, W.~Dai, D.~Xue, Z.~Zheng, C.~Li, J.~Zou, and H.~Xiong, ``Point cloud resampling with learnable heat diffusion,'' in \emph{IEEE Intl. Conf. Acoust., Speech Signal Process. (ICASSP)}.\hskip 1em plus 0.5em minus 0.4em\relax IEEE, 2025, pp. 1--5.

\bibitem{wang2025doublediffusion}
{X. Wang et al.}, ``Doublediffusion: Combining heat diffusion with denoising diffusion for generative learning on 3d meshes,'' \emph{arXiv preprint arXiv:2501.03397}, 2025.

\bibitem{hoogeboom2023blurring}
E.~Hoogeboom and T.~Salimans, ``Blurring diffusion models,'' in \emph{Intl. Conf. Learning Representations (ICLR)}, 2023.

\bibitem{park2024equivariant}
J.~Park and Y.~Shen, ``Equivariant blurring diffusion for hierarchical molecular conformer generation,'' in \emph{Advances Neural Info. Process. Syst.}, vol.~37, 2024, pp. 131\,645--131\,675.

\bibitem{ortega2018graph}
A.~Ortega, P.~Frossard, J.~Kova{\v{c}}evi{\'c}, J.~M. Moura, and P.~Vandergheynst, ``Graph signal processing: {O}verview, challenges, and applications,'' \emph{Proc. IEEE}, vol. 106, no.~5, pp. 808--828, 2018.

\bibitem{isufi2024graph}
E.~Isufi, F.~Gama, D.~I. Shuman, and S.~Segarra, ``Graph filters for signal processing and machine learning on graphs,'' \emph{IEEE Trans. Signal Process.}, vol.~72, pp. 4745--4781, 2024.

\bibitem{baxendale2007stochastic}
P.~H. Baxendale and S.~V. Lototsky, \emph{Stochastic differential equations: {T}heory and applications}.\hskip 1em plus 0.5em minus 0.4em\relax World Scientific, 2007.

\bibitem{anderson1982reverse}
B.~D. Anderson, ``Reverse-time diffusion equation models,'' \emph{Stochastic Proc. Apps.}, vol.~12, no.~3, pp. 313--326, 1982.

\bibitem{nichol2021improved}
A.~Q. Nichol and P.~Dhariwal, ``Improved denoising diffusion probabilistic models,'' in \emph{Intl. Conf. Machine Learning (ICML)}.\hskip 1em plus 0.5em minus 0.4em\relax PMLR, 2021, pp. 8162--8171.

\bibitem{sarkka2019applied}
S.~S{\"a}rkk{\"a} and A.~Solin, \emph{Applied stochastic differential equations}.\hskip 1em plus 0.5em minus 0.4em\relax Cambridge University Press, 2019, vol.~10.

\bibitem{meng2021estimating}
C.~Meng, Y.~Song, W.~Li, and S.~Ermon, ``Estimating high order gradients of the data distribution by denoising,'' \emph{Advances Neural Info. Process. Syst.}, vol.~34, pp. 25\,359--25\,369, 2021.

\bibitem{do2020graph}
T.~H. Do, D.~M. Nguyen, and N.~Deligiannis, ``Graph auto-encoder for graph signal denoising,'' in \emph{IEEE Intl. Conf. Acoust., Speech Signal Process. (ICASSP)}.\hskip 1em plus 0.5em minus 0.4em\relax IEEE, 2020, pp. 3322--3326.

\bibitem{fu2020understanding}
G.~Fu, Y.~Hou, J.~Zhang, K.~Ma, B.~F. Kamhoua, and J.~Cheng, ``Understanding graph neural networks from graph signal denoising perspectives,'' \emph{arXiv preprint arXiv:2006.04386}, 2020.

\bibitem{chen2021graph}
S.~Chen, Y.~C. Eldar, and L.~Zhao, ``Graph unrolling networks: {I}nterpretable neural networks for graph signal denoising,'' \emph{IEEE Trans. Signal Process.}, vol.~69, pp. 3699--3713, 2021.

\bibitem{nagahama2022graph}
M.~Nagahama, K.~Yamada, Y.~Tanaka, S.~H. Chan, and Y.~C. Eldar, ``Graph signal restoration using nested deep algorithm unrolling,'' \emph{IEEE Trans. Signal Process.}, vol.~70, pp. 3296--3311, 2022.

\bibitem{kroizer2022bayesian}
A.~Kroizer, T.~Routtenberg, and Y.~C. Eldar, ``Bayesian estimation of graph signals,'' \emph{IEEE Trans. Signal Process.}, vol.~70, pp. 2207--2223, 2022.

\bibitem{rey2023robust}
S.~Rey, V.~M. Tenorio, and A.~G. Marqu{\'e}s, ``Robust graph filter identification and graph denoising from signal observations,'' \emph{IEEE Trans. Signal Process.}, vol.~71, pp. 3651--3666, 2023.

\bibitem{ramakrishna2020user}
R.~Ramakrishna, H.-T. Wai, and A.~Scaglione, ``A user guide to low-pass graph signal processing and its applications: {T}ools and applications,'' \emph{IEEE Signal Process. Mag.}, vol.~37, no.~6, pp. 74--85, 2020.

\bibitem{isufi2016autoregressive}
E.~Isufi, A.~Loukas, A.~Simonetto, and G.~Leus, ``Autoregressive moving average graph filtering,'' \emph{IEEE Trans. Signal Process.}, vol.~65, no.~2, pp. 274--288, 2016.

\bibitem{gama2020graphs}
F.~Gama, E.~Isufi, G.~Leus, and A.~Ribeiro, ``Graphs, convolutions, and neural networks: {F}rom graph filters to graph neural networks,'' \emph{IEEE Signal Process. Mag.}, vol.~37, no.~6, pp. 128--138, 2020.

\bibitem{li2018diffusion}
Y.~Li, R.~Yu, C.~Shahabi, and Y.~Liu, ``Diffusion convolutional recurrent neural network: {D}ata-driven traffic forecasting,'' in \emph{Intl. Conf. Learning Representations (ICLR)}, 2018.

\bibitem{girault2017local}
B.~Girault, S.~S. Narayanan, and A.~Ortega, ``Local stationarity of graph signals: {I}nsights and experiments,'' in \emph{Wavelets and Sparsity XVII}, vol. 10394.\hskip 1em plus 0.5em minus 0.4em\relax SPIE, 2017, pp. 340--356.

\end{thebibliography}

\end{document}